\documentclass[10pt]{article}

\usepackage{amsmath,amssymb,amsthm}
\usepackage{booktabs}
\usepackage{graphicx}
\usepackage{xcolor}
\usepackage{algorithm}
\usepackage{algpseudocode}
\usepackage{multirow}
\usepackage[preprint]{tmlr}
\usepackage{hyperref}

\newtheorem{theorem}{Theorem}

\newtheorem{definition}[theorem]{Definition}
\newtheorem{proposition}[theorem]{Proposition}
\newtheorem{remark}{Remark}

\newcommand{\E}{\mathbb{E}}
\newcommand{\R}{\mathbb{R}}
\renewcommand{\Pr}{\mathbb{P}}
\newcommand{\ind}{\mathbf{1}}

\def\month{03}
\def\year{2026}

\title{The Confidence Gate Theorem:\\ When Should Ranked Decision Systems Abstain?}

\author{\name Ronald Doku \email ronald@haskelabs.com \\
      \addr Haske Labs}

\begin{document}
\maketitle

% ============================================================================
\begin{abstract}
% ============================================================================
Ranked decision systems---recommenders, ad auctions, clinical triage queues---must decide when to intervene in ranked outputs and when to abstain.
We study when confidence-based abstention monotonically improves decision quality, and when it fails.
The formal conditions are simple: rank-alignment and no inversion zones.
The substantive contribution is identifying \emph{why} these conditions hold or fail: the distinction between \emph{structural uncertainty} (missing data, e.g., cold-start) and \emph{contextual uncertainty} (missing context, e.g., temporal drift).
Empirically, we validate this distinction across three domains: collaborative filtering (MovieLens, 3 distribution shifts), e-commerce intent detection (RetailRocket, Criteo, Yoochoose), and clinical pathway triage (MIMIC-IV).
Structural uncertainty produces near-monotonic abstention gains in all domains; structurally grounded confidence signals (observation counts) fail under contextual drift, producing as many monotonicity violations as random abstention on our MovieLens temporal split.
Context-aware alternatives---ensemble disagreement and recency features---substantially narrow the gap (reducing violations from 3 to 1--2) but do not fully restore monotonicity, suggesting that contextual uncertainty poses qualitatively different challenges.
Exception labels defined from residuals degrade substantially under distribution shift (AUC drops from 0.71 to 0.61--0.62 across three splits), providing a clean negative result against the common practice of exception-based intervention.
The results provide a practical deployment diagnostic: check C1 and C2 on held-out data before deploying a confidence gate, and match the confidence signal to the dominant uncertainty type.
\end{abstract}

% ============================================================================
\section{Introduction}
\label{sec:intro}
% ============================================================================

Automated systems increasingly intervene in ranked outputs to achieve objectives: promoting relevant content, ensuring fairness, triaging urgent cases, or filtering unsafe results.
These interventions assume the system can reliably detect \emph{when} action is warranted.
Over-intervention on uncertain inputs degrades performance; under-intervention misses opportunities.

The standard approach is to train a classifier that identifies ``exceptional'' cases requiring intervention---items that deviate from expected behavior, users with atypical preferences, claims that don't fit standard pathways.
We show this approach is unreliable under distribution shift: exception labels defined from model residuals are not invariant properties of the data, and classifiers trained to predict them degrade substantially on new data.

We propose an alternative: instead of learning \emph{what} is exceptional, learn \emph{how uncertain} the system is about each decision, and gate interventions on calibrated confidence.
This reframing---from exception detection to uncertainty quantification---leads to a formal characterization we call the \textbf{Confidence Gate Theorem}, and a \textbf{deployment diagnostic framework} for deciding whether, and how, to gate a given system.

\paragraph{Contributions.}
This paper is a \emph{diagnostic empirical contribution with a formal characterization}, not a new abstention algorithm.
The formal conditions for monotonic selective accuracy (Theorem~\ref{thm:cgt}) are straightforward---they essentially require that confidence means what it should. The value is the diagnostic framework and the cross-domain empirical evidence:\nopagebreak
\begin{enumerate}
    \item We identify the \textbf{structural vs.\ contextual uncertainty} distinction as the key determinant of whether confidence gating works.
    Structural uncertainty (cold-start, data sparsity) yields near-monotonic gains with any reasonable confidence signal; contextual uncertainty (temporal drift) causes structurally grounded signals (observation counts) to produce no fewer violations than random abstention on our MovieLens temporal split.
    Context-aware signals (ensemble disagreement, recency features) substantially narrow the gap but do not fully restore monotonicity.
    \item We provide a \textbf{clean negative result}: residual-defined exception labels degrade from AUC $\approx 0.71$ (train) to $\approx 0.62$ (test) under distribution shift across three MovieLens splits. This substantial loss of discriminative power explains why exception-based intervention is unreliable in practice.
    \item We validate the framework across \textbf{three domains} (6+ datasets): collaborative filtering, e-commerce intent, and clinical triage, showing monotonic abstention curves in all structural-uncertainty settings---and diagnosing the specific failure mode (C2 violation) when contextual uncertainty dominates.
    \item We provide a \textbf{practical deployment diagnostic}: check C1 and C2 on held-out data before deploying a confidence gate; match the confidence signal to the dominant uncertainty type (count-based for structural, ensemble or recency-aware for contextual).
\end{enumerate}

% ============================================================================
% Literature review section for "The Confidence Gate Theorem".
% Usage:
%   1. Put this file next to your main .tex file and call \input{lit_review}
%   2. Add \bibliography{lit_review} (or merge the .bib entries into your main bibliography)
%   3. This section uses natbib \citep{} commands.

\section{Related Work}
\label{sec:related}

The present paper sits at the intersection of selective prediction, uncertainty quantification,
reliability-aware recommendation, and distribution shift. Because the paper is framed as a
\emph{diagnostic empirical contribution with a formal characterization}, the most useful literature
review is not a narrow list of abstention papers, but a broader account of how adjacent communities
have studied abstention, confidence calibration, risk control, drift, and intervention policies.

\subsection{Selective prediction, reject option, and abstention}

The classical starting point is the reject-option literature, which studies predictors that may abstain
instead of making a low-confidence decision. Early work by Chow formalized the error--reject
trade-off under explicit rejection costs and characterized Bayes-optimal abstention regions
\citep{chow1970rejecttradeoff}. Statistical learning formulations later developed consistency and
surrogate-loss analyses for reject-option classification \citep{herbei2006reject,bartlett2008hinge,
yuan2010convexreject}. These works establish the core intuition that abstention can improve
reliability when the system can rank examples by correctness likelihood. El-Yaniv and Wiener then
put selective classification on a more systematic footing, analyzing risk--coverage trade-offs and
identifying conditions under which one can guarantee low selective risk \citep{elyaniv2010foundations}.
Their later work, and related analyses connecting selective classification to active learning, further
clarified the theoretical structure of abstention under agnostic and noise-free assumptions
\citep{elyaniv2011agnostic,gelbhart2019relationship}.

Modern deep-learning work extended these ideas to post-hoc and end-to-end abstention mechanisms.
Geifman and El-Yaniv showed that deep networks can be equipped with learned rejection rules to
achieve target risk levels \citep{geifman2017deepselective}; \emph{SelectiveNet} folded the selection
function into the architecture itself and optimized prediction and abstention jointly
\citep{geifman2019selectivenet}. In parallel, Cortes, DeSalvo, and Mohri studied abstention in
boosting and general learning-with-rejection frameworks, supplying additional algorithmic and
optimization perspectives \citep{cortes2016boosting,cortes2016learning}. More recent work continues
to refine the geometry and evaluation of abstention, including optimal reject strategies
\citep{franc2023optimalreject}, hierarchical abstention that trades specificity for coverage
\citep{goren2024hierarchical}, and critiques of common evaluation practice that argue for
multi-threshold summaries rather than single operating points \citep{traub2024evaluationselective}.

This literature provides the natural backdrop for the present paper, but there are two important
differences. First, most selective-classification work is formulated for ordinary label prediction,
where abstention means ``do not answer.'' In the present paper, abstention is embedded inside a
\emph{ranked decision system}: the system may fall back to a default ranking, intervene only on a
subset of items, or escalate a low-confidence case. Second, much of the selective-prediction
literature either assumes that a useful confidence score is already available or focuses on how to
optimize one. By contrast, the present paper asks a more basic deployment question: \emph{when
should we expect confidence gating to help at all?} The contribution is therefore less about a new
rejection mechanism and more about characterizing the structural boundary between settings where
confidence-based abstention is monotone and settings where it can become harmful.

\subsection{Conformal prediction and risk-controlled reliability}

Conformal prediction offers a complementary route to reliability. Instead of deciding whether to
abstain, conformal methods attach finite-sample coverage guarantees to sets or intervals
\citep{vovk2005algorithmic,angelopoulos2021gentle}. Recent work has broadened the framework from
coverage to more general risk control. ``Learn then Test'' reframes risk calibration as a multiple
testing problem and provides finite-sample control for a wide range of downstream risks
\citep{angelopoulos2021learnthentest}. Conformal Risk Control (CRC) then generalizes split
conformal methods to bounded monotone losses, making it possible to control application-specific
quantities such as false negative rate or set size in expectation \citep{angelopoulos2024conformalrisk}.
These ideas have already been carried into recommendation and selection settings: Bates et al.
study distribution-free risk-controlling prediction sets \citep{bates2021riskcontrolling},
Angelopoulos et al. develop recommendation systems with distribution-free reliability guarantees
\citep{angelopoulos2023recreliability}, and Jin et al. study screening and selection rules with
conformal p-values \citep{jin2023selectionprediction}.

This line of work is closely related to the present paper because both are about reliable action
under uncertainty, not merely calibrated probabilities. The difference is in the target guarantee.
Conformal methods control \emph{coverage} or \emph{risk} of a selected prediction object
(intervals, sets, or screened candidates). The present paper instead analyzes the
\emph{monotonicity of selective utility} in ranked systems as the confidence threshold changes. That
is, it asks not whether a prediction set is valid, but whether a confidence gate yields a reliably
better intervention policy than acting everywhere or nowhere. This difference matters in ranking
pipelines, where the operational question is often ``should the system intervene on this case?''
rather than ``what label set should be output?'' In this sense the paper is adjacent to conformal
risk control, but the object of study is different: thresholded intervention quality rather than
distribution-free validity of sets.

The conformal literature is also relevant to the paper's temporal-drift story. A large body of work
now studies conformal methods under non-exchangeability, sequential dependence, and drifting
distributions, including adaptive methods for time series and online prediction
\citep{zaffran2022adaptiveconformal,bhatnagar2023onlineconformal,oliveira2024splitconformal,
farinhas2024nonexchangeablecrc}. These papers are important because they show that one can often
restore statistical guarantees under nonstationarity by changing the calibration procedure itself.
The present paper draws a different lesson. Its negative result is that in ranked decision systems
with \emph{contextual uncertainty}, merely recalibrating thresholds does not rescue monotonic
abstention when the underlying confidence signal is misaligned with the real source of error. This
positions the paper as complementary to conformal work: conformal methods ask how to maintain
validity under shift, while the present paper asks whether the \emph{signal being thresholded} is
even about the right kind of uncertainty.

\subsection{Learning to defer and human--AI triage}

A second adjacent literature studies \emph{learning to defer}, where a model chooses whether to act
or hand the example to a human or another expert. This line is especially relevant to the paper's
clinical and operational motivation because the downstream action is not simply abstention; it is a
routing or escalation decision. Madras et al.\ framed deferral as a way to improve joint human--AI
accuracy and fairness \citep{madras2018predictresponsibly}. Mozannar and Sontag then developed
consistent estimators for learning to defer to a downstream expert and formalized the decision as a
joint predictor--rejector problem \citep{mozannar2020deferexpert}. Subsequent work emphasized
complementarity between model and expert, sample efficiency, calibration of expert-correctness
probabilities, and exact algorithms for deferral under richer action spaces
\citep{charusaie2022complementhumans,verma2022calibrateddefer,mozannar2023whoshouldpredict}.

The learning-to-defer viewpoint is conceptually close to the present paper because both settings
optimize a selective policy rather than raw prediction accuracy. However, the literatures differ in
what happens after abstention. In learning to defer, the fallback decision maker is usually a human
expert whose performance is explicitly modeled. In the present paper, abstention often means
falling back to a \emph{default ranked system} or withholding an automated intervention. This
difference changes both the theory and the diagnostics: the present paper's C1/C2 conditions are
about the relationship between confidence and intervention quality, not the relationship between
model error and expert error. Even so, the deferral literature is valuable context because it makes
clear that abstention decisions are part of a broader class of \emph{selective action policies}; the
paper can therefore be read as extending this view to ranking, recommendation, and triage systems
where the fallback is automated rather than human.

\subsection{Uncertainty estimation, calibration, and distribution shift}

The paper's empirical story also depends on a large literature about what predictive confidence
means, how it should be estimated, and how it fails under distribution shift. Bayesian and
approximate Bayesian approaches such as Monte Carlo dropout connect predictive uncertainty to
posterior uncertainty over functions or parameters \citep{gal2016dropout}. Kendall and Gal's
epistemic/aleatoric decomposition helped popularize the distinction between uncertainty caused by
limited knowledge and uncertainty caused by inherent noise \citep{kendall2017uncertainties}. Deep
ensembles showed that simple model averaging often gives strong practical uncertainty estimates
\citep{lakshminarayanan2017ensembles}, while calibration work demonstrated that modern neural
networks can be sharply overconfident and often require post-hoc recalibration
\citep{guo2017calibration}. Large empirical evaluations under distribution shift then showed that
uncertainty methods that look reasonable in-distribution can fail badly once the test distribution
moves, and that ensemble-based methods are often among the most robust practical baselines
\citep{ovadia2019trustuncertainty}.

This literature is directly relevant to the present paper's distinction between structural and
contextual uncertainty. The structural/contextual split is not a reinvention of the
epistemic/aleatoric distinction; rather, it is a \emph{deployment-focused reframing} of
it. Structural uncertainty is uncertainty from missing observations, sparse history, or cold start;
contextual uncertainty is uncertainty from omitted covariates, temporal drift, and nonstationarity.
The paper's contribution is to connect that distinction to a concrete thresholding theorem and a
diagnostic recipe for ranked interventions. The uncertainty literature motivates why count-based
signals, ensembles, and recency-aware features might behave differently, but it does not by itself
predict whether thresholding those signals will produce a monotone improvement curve in a ranked
pipeline. This is precisely the gap the paper fills.

\subsection{Recommendation, ranking, and temporal dynamics}

Because the paper studies recommender systems and other ranked pipelines, it should also be placed
in the literature on uncertainty and drift in recommendation. Probabilistic matrix factorization
established a foundational latent-factor approach for sparse recommendation \citep{mnih2007pmf},
and a substantial recommender literature has long treated cold start as a canonical source of model
uncertainty \citep{schein2002coldstart,houlsby2014coldstartactive}. Koren's work on neighborhood
and factorization methods, and especially on temporal dynamics, made clear that recommendation
quality can be dominated either by sparsity or by evolving user preferences
\citep{koren2008factorization,koren2009temporaldynamics}. More broadly, concept-drift research in
streaming and adaptive systems has shown that models can degrade not only because they lack data
but because the data-generating process itself changes over time \citep{gama2014conceptdrift}.

The present paper's structural/contextual distinction can be read as importing this intuition into
selective prediction. Cold start, sparse history, and rare categories are cases where uncertainty is
mostly about insufficient observations; the literature suggests that confidence proxies tied to data
density are likely to be informative there. Temporal drift, seasonality, and evolving preferences
are different: older counts can be abundant yet misleading, because they summarize a world that no
longer exists. The paper's contribution is to show that this difference is not merely descriptive;
it predicts whether confidence gating is monotone. Existing recommender work studies how to model
uncertainty, adapt to time, or improve cold-start performance, but it does not generally ask the
paper's central question: \emph{under what conditions does a confidence layer on top of a ranked
decision system monotonically improve action quality?} The paper therefore draws on recommender
theory while adding a new diagnostic lens.

\subsection{Out-of-distribution detection, anomaly detection, and exception mining}

The paper's negative result on residual-defined exception labels is also best understood relative to
OOD and anomaly-detection literatures. A common deployment instinct is to identify ``hard'',
``exceptional'', or ``unusual'' cases and intervene only there. Softmax confidence, misclassification
detection, and OOD scoring are natural tools for this purpose \citep{hendrycks2017baseline}. More
sophisticated anomaly-detection systems improve performance by augmenting training with explicit
outliers or exposure to anomalous data \citep{hendrycks2019outlier}. These approaches are useful
when the deployed task is genuinely to detect out-of-distribution or adversarial inputs. However,
they are not a direct substitute for a confidence gate in a ranked pipeline. The paper's argument is
that ``exceptionality'' defined by residuals is a moving target under shift: what counted as a large
error yesterday need not define a stable class tomorrow. This point complements the OOD literature
rather than contradicting it. OOD methods ask whether an input departs from the training
distribution; the present paper asks whether a \emph{selective intervention criterion} remains
predictive of downstream utility after the system or environment changes.

\subsection{Selective prediction in high-stakes domains}

Finally, there is a growing application literature showing that abstention, triage, and uncertainty
quantification matter in high-stakes decision making. In medicine, for example, selective workflows
have been proposed to triage screening mammograms and reduce clinician workload while preserving
safety \citep{yala2019triagemammograms}. The general lesson from these studies is that abstention is
valuable when coupled to a workflow-aware fallback policy. The present paper contributes to this
application-facing discussion by giving a simple test for when a confidence layer is likely to be
helpful before deployment. That kind of pre-deployment diagnostic is especially important in
healthcare and operational ranking systems, where offline accuracy alone can obscure whether an
uncertainty score is aligned with the cases on which intervention is actually useful.

\subsection{Positioning of the present paper}

Taken together, the literature suggests four broad conclusions. First, abstention and reject-option
learning are well established, but most theory is written for classification rather than ranked
intervention systems \citep{chow1970rejecttradeoff,bartlett2008hinge,elyaniv2010foundations,
geifman2017deepselective}. Second, modern reliability methods such as conformal risk control offer
strong statistical guarantees, but they target coverage or user-specified risk rather than the
monotonicity of intervention value in a ranking pipeline \citep{angelopoulos2024conformalrisk,
angelopoulos2023recreliability}. Third, learning-to-defer work shows that abstention is really a
routing policy problem, but usually with a human expert as the fallback decision maker
\citep{mozannar2020deferexpert,verma2022calibrateddefer}. Fourth, uncertainty estimation and
recommender-system literatures explain why cold-start sparsity and temporal drift should behave
differently, yet they stop short of giving a simple deployment diagnostic for confidence gating
\citep{kendall2017uncertainties,lakshminarayanan2017ensembles,koren2009temporaldynamics,
ovadia2019trustuncertainty}.

The present paper is best understood as combining these strands into a question that the prior
literature leaves open: \emph{when does confidence-based abstention improve ranked decision quality,
and when does it fail?} Its novelty is not a new abstention architecture, a new conformal guarantee,
or a new uncertainty estimator. Instead, it contributes (i) a formal characterization of monotonic
selective accuracy in ranked decision systems, (ii) a deployment diagnostic based on rank-alignment
and inversion zones, and (iii) a cross-domain empirical claim that the dominant uncertainty type
--- structural versus contextual --- determines whether a confidence gate is useful. That
positioning makes the paper complementary to selective-classification theory, conformal reliability,
learning-to-defer, and recommender uncertainty, while still occupying a distinct niche at their
intersection.

% ============================================================================
\section{The Confidence Gate Theorem}
\label{sec:theory}
% ============================================================================

\subsection{Setup}

Let $\mathcal{X}$ be the space of decision inputs (user-item pairs, ad requests, clinical claims).
A ranked decision system produces predictions $\hat{y}(x)$ for each $x \in \mathcal{X}$.
Let $\text{acc}: \mathcal{X} \to \{0, 1\}$ be a binary accuracy indicator (e.g., whether the prediction was correct, whether the intervention helped).
Let $c: \mathcal{X} \to [0, 1]$ be a confidence function.

\begin{definition}[Selective Accuracy]
For threshold $t \in [0, 1]$, the selective accuracy is:
\[
\text{SA}(t) = \E[\text{acc}(X) \mid c(X) \geq t].
\]
The selective coverage is $\phi(t) = \Pr(c(X) \geq t)$.
\end{definition}

The confidence gate decides: if $c(x) \geq t$, act on the prediction; otherwise, fall back to a safe default (e.g., popularity ranking, manual review, contextual-only bid).

\subsection{Main Result}

\begin{theorem}[Confidence Gate Theorem]
\label{thm:cgt}
$\text{SA}(t)$ is monotonically non-decreasing in $t$ if and only if the following condition holds:

\textbf{C2 (No Inversion Zones):} For all $0 \leq a < b$,
\[
\E[\text{acc}(X) \mid c(X) \in [a, b]] \leq \E[\text{acc}(X) \mid c(X) \in [b, \infty)].
\]
\end{theorem}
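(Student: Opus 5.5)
The plan is to reduce everything to a single mixture identity and then handle boundary and atom issues separately. Throughout, I adopt the convention that a conditional expectation on a null event imposes no constraint. So statements like ``$\text{SA}(a)\le \text{SA}(b)$'' or the C2 inequality are taken to hold vacuously whenever the relevant event $\{c(X)\in[a,b]\}$ or $\{c(X)\ge b\}$ has probability zero. Since $c$ takes values in $[0,1]$, $[b,\infty)$ may be read as $\{c(X)\ge b\}$, so the right-hand side of C2 is exactly $\text{SA}(b)$.

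\textbf{Key identity.} For $a<b$, split $\{c(X)\ge a\}$ as the disjoint union of $\{c(X)\in[a,b)\}$ and $\{c(X)\ge b\}$. Then
\[
\text{SA}(a) = w\,A_{[a,b)} + (1-w)\,\text{SA}(b),\qquad w=\frac{\Pr(c(X)\in[a,b))}{\phi(a)},
\]
with $A_I:=\E[\text{acc}(X)\mid c(X)\in I]$. A convex combination lies below one of its endpoints iff the other endpoint does. Hence, when both pieces have positive mass, $\text{SA}(a)\le\text{SA}(b)$ iff $A_{[a,b)}\le \text{SA}(b)$. If either piece is null, the inequality is trivial or vacuous. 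This gives monotonicity $\iff$ the \emph{half-open} version of C2.

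\textbf{Passing to the closed interval $[a,b]$.} This is the step I expect to be the main (if minor) obstacle, because an atom of $c(X)$ at $b$ sits on both sides of C2. I will prove the two directions separately.

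For (C2 $\Rightarrow$ monotone): note $A_{[a,b)}$ is the limit of $A_{[a,b']}$ as $b'\uparrow b$, taken along $b'$ with positive mass. Likewise $\text{SA}(b')\to\text{SA}(b)$ as $b'\uparrow b$ by continuity of measure, provided no mass is lost. Applying closed C2 at $(a,b')$ and letting $b'\uparrow b$ gives $A_{[a,b)}\le\text{SA}(b)$. If there is no mass just below $b$, I will instead use the identity directly with a smaller $b'$.

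Alternatively, and more cleanly, I can write $\text{SA}(a)$ as a mixture of $A_{[a,b]}$ and $\E[\text{acc}\mid c>b]$. Closed C2 with $b$ replaced by $b+\epsilon$ then controls the pieces; I will pick whichever decomposition makes the limits simplest.

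For (monotone $\Rightarrow$ C2): decompose $A_{[a,b]}$ as a mixture of $A_{[a,b)}$ and the atom value $\E[\text{acc}\mid c=b]$. The first piece is $\le\text{SA}(b)$ by the key identity. For the atom, monotonicity gives $\text{SA}(b)\le\text{SA}(b')$ for all $b'>b$. Letting $b'\downarrow b$ yields $\text{SA}(b)\le \E[\text{acc}\mid c>b]$. Writing $\text{SA}(b)$ itself as a mixture of the atom value and $\E[\text{acc}\mid c>b]$, this forces $\E[\text{acc}\mid c=b]\le\text{SA}(b)$. Hence the whole mixture $A_{[a,b]}\le\text{SA}(b)$.

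Finally, I will remark that everything is just the law of total expectation plus continuity of probability. This is consistent with the paper's framing that the theorem is formally simple and the substance lies in when C2 holds empirically.
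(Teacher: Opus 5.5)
Your proof is correct. Its core is the same law-of-total-expectation split of $\{c(X)\ge a\}$ into $[a,b)$ and $[b,\infty)$ that the paper uses in \eqref{eq:decomp}. The difference is that you reconcile that half-open split with the closed interval $[a,b]$ in C2, and the paper does not.

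In sufficiency, the paper cites ``C2 with $a=t_1$, $b=t_2$'' for a bound on $\E[\text{acc}\mid c\in[t_1,t_2)]$. When $c(X)$ has an atom at $t_2$, that bound does not follow from C2 at that pair. It needs C2 at pairs $(t_1,b')$ with $b'\uparrow t_2$, which is exactly your limit.

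In necessity, the paper asserts that a closed-interval violation at $(a,b)$ gives $\text{SA}(a)>\text{SA}(b)$. That can fail outright. Take three equally weighted atoms at $0.2,0.5,0.8$ with accuracies $0.5,0.9,0.1$. C2 fails at $(0.2,0.5)$, since $0.7>0.5$, yet $\text{SA}(0.2)=\text{SA}(0.5)=0.5$. Monotonicity breaks only between $0.5$ and $0.8$, which is what your atom argument (via $b'\downarrow b$) detects. So your version proves the theorem as stated for arbitrary distributions of $c(X)$. The paper's shorter argument really establishes the half-open variant of C2, or the closed variant only when $c(X)$ has no atoms.

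Two small cleanups:
\begin{itemize}
\item In the necessity direction, treat $\Pr(c(X)>b)=0$ separately. Then $\text{SA}(b')$ is undefined for $b'>b$, but $\text{SA}(b)$ equals the atom value, so the atom bound is trivial.
\item Drop the ``alternative'' decomposition into $[a,b]$ and $\{c>b\}$. At the pair $(a,b)$ it only yields $\text{SA}(a)\le\E[\text{acc}\mid c>b]$, and it needs a further limit from below to reach $\text{SA}(b)$. Your first argument is already complete.
\end{itemize}
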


\begin{proof}
(\emph{Sufficiency.}) Let $t_1 < t_2$ with $\phi(t_1) > 0$.
Decompose by the law of total expectation:
\begin{align}
\text{SA}(t_1) &= \frac{\phi(t_2)}{\phi(t_1)} \cdot \text{SA}(t_2) + \frac{\phi(t_1) - \phi(t_2)}{\phi(t_1)} \cdot \E[\text{acc}(X) \mid c(X) \in [t_1, t_2)]. \label{eq:decomp}
\end{align}
By C2 with $a = t_1$, $b = t_2$: $\E[\text{acc}(X) \mid c(X) \in [t_1, t_2)] \leq \text{SA}(t_2)$.
Substituting into~\eqref{eq:decomp}:
\[
\text{SA}(t_1) \leq \frac{\phi(t_2)}{\phi(t_1)} \cdot \text{SA}(t_2) + \frac{\phi(t_1) - \phi(t_2)}{\phi(t_1)} \cdot \text{SA}(t_2) = \text{SA}(t_2).
\]

(\emph{Necessity.}) Suppose C2 is violated: there exist $a < b$ with $\E[\text{acc} \mid c \in [a, b]] > \E[\text{acc} \mid c \geq b]$.
Applying~\eqref{eq:decomp} with $t_1 = a$, $t_2 = b$ gives $\text{SA}(a) > \text{SA}(b)$, directly violating monotonicity.
\end{proof}

\begin{proposition}[C1 Implies C2]
\label{prop:c1_implies_c2}
Define the pointwise rank-accuracy alignment condition:

\textbf{C1 (Rank-Accuracy Alignment):} For all $x_1, x_2 \in \mathcal{X}$,
\[
c(x_1) > c(x_2) \implies \E[\text{acc}(x_1)] \geq \E[\text{acc}(x_2)].
\]

If C1 holds, then C2 holds. The converse is false: C2 can hold even when C1 is violated at measure-zero sets, because pointwise misranking can be masked by averaging.
Thus C1 is a sufficient condition for monotonic selective accuracy that is practically verifiable via rank correlation (Spearman $\rho$), but it is not necessary.
\end{proposition}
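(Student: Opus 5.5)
The plan is to read $\E[\text{acc}(x)]$ as the pointwise conditional success probability $m(x) = \Pr(\text{acc}(X)=1 \mid X = x)$. All conditional expectations in C2 are then averages of $m$ over events of the form $\{c(X) \in S\}$; when such an event has probability zero, the corresponding inequality is treated as vacuous, in the same way Theorem~\ref{thm:cgt} requires $\phi(t_1) > 0$.

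For the forward direction, fix $0 \le a < b$ and split the inputs into three disjoint regions:
\[
L = \{a \le c < b\}, \quad B = \{c = b\}, \quad H = \{c > b\}.
\]
Write $\mu_L, \mu_B, \mu_H$ for the averages of $m$ over these regions, whenever they have positive mass.

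\begin{enumerate}
\item C1 gives $m(x_1) \le m(x_2)$ whenever $x_1 \in L$ and $x_2 \in B \cup H$, since then $c(x_1) < c(x_2)$. It likewise gives $m(x_1) \le m(x_2)$ for $x_1 \in B$ and $x_2 \in H$. Hence $\sup_L m \le \inf_{B \cup H} m$ and $\sup_B m \le \inf_H m$.
\item Averaging these pointwise bounds yields $\mu_L \le \mu_B \le \mu_H$, and also $\mu_L \le \mu_H$ when $B$ is null.
\item The left side of C2 is the average over $L \cup B$, which is a convex combination of $\mu_L$ and $\mu_B$, so it is at most $\mu_B$ (or at most $\mu_L$ if $B$ is null).
\item The right side is the average over $B \cup H$, which is a convex combination of $\mu_B$ and $\mu_H$, so it is at least $\mu_B$ (or at least $\mu_H$ if $B$ is null).
\item Chaining these bounds gives C2.
\end{enumerate}

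The main obstacle is the tie set $B$. C1 says nothing about pairs with equal confidence, and the closed interval $[a,b]$ in C2 shares the level $b$ with $[b,\infty)$. The decomposition above handles this because the points at level $b$ enter both sides only through the common value $\mu_B$, so no comparison within $B$ is ever needed. The degenerate cases, where some of $L$, $B$, $H$ are null, should be checked separately; each follows from the same convex-combination argument.

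For the failure of the converse, I will construct an explicit counterexample. Take $X$ uniform on $[0,1]$ with $c(x) = x$, and define
\[
m(x) = x \ \text{ for } x \ne 1/2, \qquad m(1/2) = 1 .
\]
C1 fails at the pair $x_1 = 1/2$, $x_2 = 0.6$, since $c(x_1) < c(x_2)$ but $m(x_1) > m(x_2)$. However, every conditional expectation in C2 is an integral, and changing $m$ on the null set $\{1/2\}$ does not affect any integral. Each such average therefore equals the average of the increasing function $x$. An average of $x$ over $[a,b]$ is at most $b$, and an average over $[b,1]$ is at least $b$, so C2 holds.

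Combining the forward direction with Theorem~\ref{thm:cgt} shows that C1 is sufficient for monotone $\text{SA}(t)$. The counterexample shows it is not necessary, since there C2 holds, and hence by Theorem~\ref{thm:cgt} $\text{SA}$ is monotone, while C1 fails. The remark about verifying C1 through Spearman $\rho$ is a practical comment rather than part of the formal claim, and I would not attempt to prove it.
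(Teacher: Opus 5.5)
Your proposal is correct and follows the same route as the paper: C1 gives pointwise domination of $\{c \ge b\}$ over lower-confidence inputs, this passes to the conditional averages, and the converse fails because pointwise accuracy can be changed on a null set without affecting C2. Your split into $L, B, H$ is more careful than the paper's argument, which establishes domination only over $[a,b)$ and then concludes C2 for the closed interval $[a,b]$ without addressing the tie set $\{c=b\}$; your uniform-$[0,1]$ example also turns the paper's sketched measure-zero counterexample into an explicit one.
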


\begin{proof}
Assume C1 holds. For any $0 \leq a < b$, every $x$ with $c(x) \geq b$ has $\E[\text{acc}(x)] \geq \E[\text{acc}(x')]$ for all $x'$ with $c(x') \in [a, b)$ (by C1).
Therefore $\E[\text{acc}(X) \mid c(X) \geq b] \geq \E[\text{acc}(X) \mid c(X) \in [a, b]]$, which is C2.

For the converse failure, consider a distribution where $c(x_1) > c(x_2)$ with $\E[\text{acc}(x_1)] < \E[\text{acc}(x_2)]$ but $\{x_1, x_2\}$ has measure zero.
Then C1 is violated but C2 can still hold, since the aggregate expectations are unaffected by measure-zero sets.
\end{proof}

\begin{remark}
The formal conditions are straightforward: they essentially require that confidence means what it should.
The value is not the theorem per se but the \emph{diagnostic framework} it provides---C1 and C2 are cheaply testable on held-out data, and their violation modes (rank misalignment, inversion zones) point to specific failure mechanisms.
The structural--contextual distinction (Section~\ref{sec:theory_structural}) explains \emph{why} C1 and C2 hold or fail, which is the paper's primary contribution.
\end{remark}

\begin{remark}
C1 is a \emph{pointwise} condition on the confidence function; C2 is an \emph{aggregate} condition on confidence intervals.
In practice, C1 can be verified by rank correlation (Spearman $\rho$) and C2 by binning confidence into zones and checking for accuracy inversions.
\end{remark}

\subsection{The Structural--Contextual Uncertainty Distinction}
\label{sec:theory_structural}

The theorem tells us \emph{when} confidence gating works, but not \emph{why} it sometimes fails.
We identify the mechanism through a decomposition of prediction uncertainty.

Consider a prediction target $Y_{x,t}$ that depends on input $x$ and context $t$ (time, environment, unobserved state):
\begin{equation}
Y_{x,t} = f(x) + g(x, t) + \epsilon, \label{eq:decomp_uncertainty}
\end{equation}
where $f(x)$ captures stable structure (e.g., user taste, pathway membership) and $g(x, t)$ captures context-dependent variation (e.g., mood, seasonality, policy changes).

\begin{definition}[Structural Uncertainty]
Uncertainty is \emph{structural} if it arises from insufficient data to estimate $f(x)$.
This occurs for cold-start entities (new users, new items, rare diagnoses) and is predictable from data density features (observation counts, category coverage).
\end{definition}

\begin{definition}[Contextual Uncertainty]
Uncertainty is \emph{contextual} if it arises from the unobserved term $g(x, t)$.
This occurs under temporal drift, distribution shift, or missing environmental features, and is \emph{not} predictable from historical data density.
\end{definition}

\begin{proposition}[Structural Gating --- Empirical Hypothesis]
\label{cor:structural}
If uncertainty is predominantly structural (i.e., $\text{Var}(g) \ll \text{Var}(f - \hat{f})$) and the confidence function is monotonically related to data density, then C1 and C2 hold and confidence gating is monotonically beneficial.
This requires the implicit assumption that data density is monotonically related to estimation error of $f$ and that $g(x,t)$ is independent of data density---conditions we verify empirically rather than prove.
\end{proposition}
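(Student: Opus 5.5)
The plan is to turn the hypothesis into a chain of monotone relations and then invoke Proposition~\ref{prop:c1_implies_c2} and Theorem~\ref{thm:cgt}. The explicit assumptions will be:
\begin{itemize}
\item[(A1)] $c(x) = h(n(x))$ for a non-decreasing $h$, where $n(x)$ is the data density (observation count).
\item[(A2)] The conditional estimation error $e(x) = \E[(f(x)-\hat f(x))^2 \mid n(x)]$ is non-increasing in $n(x)$.
\item[(A3)] $g(x,t)$ is independent of $n(x)$, with a common variance $\sigma_g^2$, and $\epsilon$ is independent noise with variance $\sigma_\epsilon^2$.
\item[(A4)] $\E[\text{acc}(x)] = \psi(\text{total error at } x)$ for some non-increasing link $\psi$. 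For example, acc is the indicator that $|Y - \hat y| \le \tau$, and the error is sub-Gaussian with a scale that grows with its variance.
\end{itemize}
Since the proposition is labelled an empirical hypothesis, the honest deliverable is a conditional theorem under (A1)--(A4), together with a note on which step fails without them.

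\textbf{Step 1.} By (A3) the cross terms vanish, so the expected squared error at $x$ decomposes as
\[
\E[(Y-\hat y)^2 \mid x] = e(x) + \sigma_g^2 + \sigma_\epsilon^2 .
\]
Only $e(x)$ varies with $n(x)$.

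\textbf{Step 2.} Take $c(x_1) > c(x_2)$. By (A1), $n(x_1) \ge n(x_2)$. By (A2), $e(x_1) \le e(x_2)$, so the total error is no larger at $x_1$. By (A4), $\E[\text{acc}(x_1)] \ge \E[\text{acc}(x_2)]$. This is C1.

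\textbf{Step 3.} Proposition~\ref{prop:c1_implies_c2} gives C2 from C1, and Theorem~\ref{thm:cgt} then gives that $\text{SA}(t)$ is non-decreasing in $t$.

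\textbf{Role of $\text{Var}(g) \ll \text{Var}(f-\hat f)$.} Under exact independence (A3) this condition is not needed for the ordering at all. It matters once we allow $g$ to depend weakly on density. In that case the contextual variance $\sigma_g^2(x)$ varies with $x$, and Step~2 needs
\[
e(x_2) - e(x_1) \ge \sigma_g^2(x_1) - \sigma_g^2(x_2)
\]
whenever $n(x_1) > n(x_2)$. A robust version would therefore assume a margin: $e$ decreases in $n$ at a rate exceeding the Lipschitz variation of $\sigma_g^2$ in $n$. The ratio condition is a heuristic surrogate for this margin.

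\textbf{Main obstacle.} The hardest step is (A4). Monotonicity of error variance does not in general imply monotonicity of a thresholded accuracy, since distributions with different shapes can invert. I would first try to require a location-scale family for the residual, e.g.\ Gaussian $Y - \hat y \sim N(b(x), s^2(x))$. Then $\Pr(|Y-\hat y| \le \tau)$ is decreasing in $s$ when the bias is zero. If bias is present, I would need to assume $b$ also shrinks with $n$.

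Failing that, I would state the conclusion with accuracy defined as negative squared error, where Step~2 is immediate. I would also flag that for binary accuracy, C1 must be checked empirically via Spearman $\rho$, consistent with the paper's framing.
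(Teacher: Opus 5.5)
The paper gives no derivation here. It labels the statement an empirical hypothesis and supports it only with the cold-split curves in Table~\ref{tab:ml_abstain} and the monotone tiers on the e-commerce and MIMIC-IV data. Recasting it as a conditional theorem is therefore a genuinely different route. Your observation that $\text{Var}(g) \ll \text{Var}(f-\hat f)$ plays no role in the ordering once $g$ is independent of density is correct, and sharper than the paper's phrasing.

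\textbf{Gap in Steps~1--2.} As written, these steps do not yield C1. Your (A2) defines $e$ as a level-set average, $\E[(f(x)-\hat f(x))^2 \mid n(x)]$. Step~1 then identifies it with the estimation part of $\E[(Y-\hat y)^2 \mid x]$. That part is the pointwise quantity $(f(x)-\hat f(x))^2$ (or its mean over training draws), and it can vary arbitrarily among inputs with equal counts. Take a heavily observed pair on which $\hat f$ is badly off and a sparse pair that happens to be well predicted: this violates C1 on a set of positive measure while all your averaged assumptions still hold. If instead ``total error at $x$'' in (A4) means the level-set quantity, then (A4) simply postulates that $\E[\text{acc}(x)]$ depends on $x$ only through $n(x)$, so it assumes C1 rather than deriving it.

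\textbf{Repair.} Bypass C1 and Proposition~\ref{prop:c1_implies_c2}. Since $c = h(n)$, put $\bar a(n) = \E[\text{acc}(X) \mid n(X) = n]$ and show $\bar a$ is non-decreasing using level-set versions of (A2)--(A4). Because $h(n') < b \le h(n)$ forces $n' < n$, every count feeding the zone $c \in [a,b)$ lies below every count feeding $c \ge b$. C2 then follows by averaging, and Theorem~\ref{thm:cgt} gives monotone SA. This delivers C2 but not C1. The statement's ``C1 and C2 hold'' therefore needs the stronger assumption that the pointwise error depends on $x$ only through $n(x)$.

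\textbf{Cross terms.} They do not vanish ``by (A3)'': independence of $g$ from $n$ does not kill $\E[(f-\hat f)\,g \mid n]$. You need the test-time context to be mean zero and uncorrelated with the fitted model's error, e.g.\ $\E[g \mid x, \hat f] = 0$. State this explicitly. It is exactly what temporal drift breaks: $\hat f$ has absorbed the training-period $g$, so the test-period $g$ correlates with $f-\hat f$. It is the real formal content of ``structural''.

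\textbf{Gaussian patch.} Shrinking the bias is not enough. For $|b| > \tau$, $\Pr(|N(b,s^2)| \le \tau)$ increases with $s$ when $s$ is small, so shrinking $b$ and $s$ together can lower accuracy. You need something like $|b| \le \tau$.

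\textbf{Squared-error fallback.} This sidesteps the Gaussian issue and is not a retreat. It is the metric the paper actually uses on MovieLens (selective RMSE, Remark~\ref{rem:metric}). The sufficiency half of Theorem~\ref{thm:cgt} uses only the law of total expectation, not binarity of $\text{acc}$. At the level-set level, $\bar a(n) = -(e(n) + \sigma_g^2 + \sigma_\epsilon^2)$ is then immediately non-decreasing.
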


\begin{proposition}[Contextual Failure --- Empirical Hypothesis]
\label{cor:contextual}
If uncertainty is predominantly contextual (i.e., $\text{Var}(g) \gg \text{Var}(f - \hat{f})$), then confidence scores based on historical data density rank uncertainty incorrectly under shift, violating C1.
Abstention at moderate thresholds may \emph{increase} error by removing well-predicted inputs.
\end{proposition}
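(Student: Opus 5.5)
We would establish Proposition~\ref{cor:contextual} by building an explicit instance inside the model~\eqref{eq:decomp_uncertainty} in which the stated mechanism is visible. The claim is labelled an empirical hypothesis, so the only thing a proof can honestly deliver is this: under specified conditions on how $g$ depends on data density, a density-based confidence violates C1. We would show that C1 fails, and then that C2 fails as well, so that by Theorem~\ref{thm:cgt} $\text{SA}(t)$ is not monotone.

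\textbf{Setup.} Let $n(x)$ be the historical observation count and take $c(x)=h(n(x))$ with $h$ increasing. Write the error at deployment time $t'$ as
\[
Y_{x,t'}-\hat{y}(x) = \bigl(f(x)-\hat f(x)\bigr) + g(x,t') + \epsilon .
\]
Two assumptions drive everything.
\begin{enumerate}
\item Structural error shrinks with density: $\E[(f-\hat f)^2 \mid n] \le \sigma^2/n$.
\item Under shift, the drift magnitude is nondecreasing in $n$: $\E[g(x,t')^2 \mid n]=\tau^2(n)$.
\end{enumerate}
The second assumption is the formal content of ``older counts can be abundant yet misleading'': heavily observed entities carry a long history, and under drift that history is stale. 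Take $\text{acc}(x)=\ind\{|Y-\hat y|\le\delta\}$. For Gaussian errors, $\E[\text{acc}\mid n]$ is then a decreasing function of the total variance
\[
v(n)=\sigma^2/n+\tau^2(n)+\sigma_\epsilon^2 .
\]

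\textbf{Key steps.}
\begin{enumerate}
\item Show that the regime $\text{Var}(g)\gg\text{Var}(f-\hat f)$ makes $v(n)$ increasing over the bulk of the count distribution. The derivative is $-\sigma^2/n^2+\tau^{2\prime}(n)$, and this is positive wherever the growth of drift dominates the decay of estimation variance, which is what the variance-ratio hypothesis buys us.
\item Conclude that there exist $x_1,x_2$ of positive measure with $c(x_1)>c(x_2)$ but $\E[\text{acc}(x_1)]<\E[\text{acc}(x_2)]$, so C1 fails. Positive measure is needed here: Proposition~\ref{prop:c1_implies_c2} warns that measure-zero violations are invisible to C2.
\item Pick an interval $[a,b)$ lying in the region where $v$ increases. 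There $\E[\text{acc}\mid c\in[a,b)] > \E[\text{acc}\mid c\ge b]$. Applying the decomposition~\eqref{eq:decomp} with $t_1=a$ and $t_2=b$ gives $\text{SA}(a)>\text{SA}(b)$.
\end{enumerate}
Step 3 is the precise version of ``abstention at moderate thresholds increases error'': raising the threshold from $a$ to $b$ discards the low-count inputs, which are the better-predicted ones.

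\textbf{Main obstacle.} The hard part is justifying assumption 2. As the proposition is phrased, it claims only that $g$ is ``not predictable from density,'' and that alone gives independence, not positive correlation. If $g$ is truly independent of $n$, then $v(n)$ is still decreasing, since the $\sigma^2/n$ term still decays, and C1 holds; the proposition would then be false. So the plan must add an explicit hypothesis, either that $\tau^2(n)$ is increasing or that $\hat f$ is fitted to pre-shift data so that high-$n$ estimates are confidently wrong. We would state this as a condition and then verify it empirically, matching how Proposition~\ref{cor:structural} is handled.

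A second, smaller difficulty is that ``$\gg$'' has to be made quantitative. We would state an explicit sufficient inequality, $\tau^{2\prime}(n) > \sigma^2/n^2$ on an interval of positive probability, rather than lean on an informal ratio.
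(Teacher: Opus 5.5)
\textbf{Comparison with the paper.} The paper does not derive this statement. It labels it an empirical hypothesis and supports it only with the MovieLens temporal split: count-based abstention improves RMSE up to 10\% abstention and then worsens, with three violations, the same number as random abstention. Your model-based route is therefore genuinely different. Your ``main obstacle'' paragraph is correct and sharper than the paper. The variance ratio together with ``$g$ is not predictable from density'' cannot force a C1 violation: with $\E[g^2\mid n]$ constant, $v(n)$ still decreases and density gating stays monotone in expectation. So assumption~2, or something like it, is indispensable. For the same reason, drop the sentence in Step~1 saying the variance ratio ``buys'' the derivative condition; a global ratio of variances says nothing about $(\tau^2)'$.

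\textbf{The gap is in Step~3.} Knowing that $v$ increases on $[a,b)$ only orders points \emph{inside} that interval. C2, however, compares $\E[\text{acc}\mid c\in[a,b)]$ with the average over the \emph{entire} tail $c\ge b$, and that tail can be dominated by high-count inputs with much smaller $v$. Here is a counterexample under your stated conditions:
\begin{itemize}
\item Let $\sigma^2=1$, put 1\% of the count mass on $[10,11)$ and the rest near $n=10^4$, and set $\tau^2(n)=0.02\min(n-10,1)$.
\item Then $(\tau^2)'(n)=0.02>\sigma^2/n^2$ on $[10,11)$, assumption~2 holds, and $\text{Var}(g)/\text{Var}(f-\hat f)\approx 19$.
\item Yet $v-\sigma_\epsilon^2\approx 0.1$ on $[10,11)$ versus $\approx 0.02$ on the bulk, and $v$ decreases on $[11,\infty)$.
\item Consequently no interval is an inversion zone, and $\text{SA}$ is monotone.
\end{itemize}
This also refutes your remark in Step~2. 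Averaging masks positive-measure C1 violations too, not only measure-zero ones, so Step~2 cannot feed Step~3. The implication runs the other way: a C2 violation already forces a C1 violation, so Step~3 carries all the content.

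\textbf{Repairs.} Require $v$ to be increasing on the whole upper range $n\ge n_a$, not merely on some interval. Then accuracy is non-increasing on all of $c\ge a:=h(n_a)$, and any $b>a$ with mass on both sides gives the inversion. A natural instance is $\tau^2(n)=\kappa\log n$. It makes $v$ U-shaped with minimum at $n^*=\sigma^2/\kappa$, which reproduces the paper's observed gain-then-loss abstention curve. Two smaller fixes are also needed:
\begin{itemize}
\item Assumption~1 is only an upper bound and controls no derivative, so posit $\E[(f-\hat f)^2\mid n]=\sigma^2/n$ before differentiating.
\item The Gaussian reduction needs independent, mean-zero components; a second moment for $g$ is not enough. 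You can avoid this entirely by working with selective MSE as in Remark~\ref{rem:metric}, where uncorrelatedness gives $\E[(Y-\hat y)^2\mid n]=v(n)$ exactly.
\end{itemize}
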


These propositions are empirical hypotheses motivated by the decomposition in Eq.~\eqref{eq:decomp_uncertainty}, not formal consequences of the theorem.
We test them across three domains: cold-start abstention helps monotonically, while temporal-drift abstention is non-monotonic (Section~\ref{sec:movielens}).

\subsection{Exception Labels Are Not Invariant}

A common alternative to confidence gating is \emph{exception detection}: train a classifier to predict which inputs will have large residuals, then intervene on predicted exceptions.

\begin{proposition}[Exception Instability]
\label{prop:exception}
Let exceptions be defined as $\mathcal{E} = \{x : |Y_x - \hat{Y}_x| > \tau\}$ for threshold $\tau$ (where $\tau$ is the training-set 95th percentile).
Under temporal shift from training distribution $P$ to test distribution $Q$:
\begin{enumerate}
    \item The residual distribution $|Y - \hat{Y}|$ shifts (KS test rejects $P = Q$).
    \item Exception membership $\ind[x \in \mathcal{E}]$ is not invariant: the set of exceptions changes.
    \item A classifier trained to predict $\ind[x \in \mathcal{E}_P]$ degrades substantially on $Q$-exceptions (AUC drops by $\sim$0.1 in our experiments).
\end{enumerate}
\end{proposition}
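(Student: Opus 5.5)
The plan is to treat Proposition~\ref{prop:exception} as a mixed statement. It has a formal core, which I would prove under an explicit modeling assumption: shift in the context term $g(x,t)$ of Eq.~\eqref{eq:decomp_uncertainty}. It also has quantitative parts (the KS rejection and the $\sim$0.1 AUC drop), which can only be established empirically. For the formal core, fix $\hat{Y}$ trained on $P$ and write the residual as
\[
R = Y_{x,t} - \hat{Y}_x = (f(x)-\hat{f}(x)) + g(x,t) + \epsilon .
\]
Passing from $P$ to $Q$ changes the law of the context variable $t$, and possibly of $x$. Since $\hat{Y}$ is frozen, the residual law under $Q$ is a convolution of the structural error with a \emph{different} distribution of $g(x,t)$. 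The first step is therefore to show item~1. Whenever $\text{Var}_Q(g) \neq \text{Var}_P(g)$, or more generally whenever the law of $g$ changes and is not absorbed by the $\epsilon$ term, the law of $|R|$ changes. This follows because a shift in the second moment of $g$ (with $g$ independent of $\epsilon$ and of $f-\hat{f}$ given $x$) shifts $\E|R|^2$, so the distributions of $|R|$ must differ. Under that distributional difference a KS test is consistent, and it rejects with probability tending to one as the sample sizes grow.

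For item~2 I would fix the threshold $\tau$ at the $P$-quantile and note that $\Pr_Q(|R|>\tau)\neq 0.05$ in general. For an individual $x$, membership in $\mathcal{E}$ depends on $g(x,t)$, and $t$ is now drawn from a different law. Hence $\Pr_Q(x\in\mathcal{E}\mid x) \neq \Pr_P(x\in\mathcal{E}\mid x)$ on a set of $x$ of positive measure. This is exactly the non-invariance claim. A simple two-point construction shows it is not vacuous: let $g(x,t)=\pm\delta$ with the sign depending on $t$, so that $x$ is an exception under $P$ and not under $Q$.

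For item~3, the formal argument is that the Bayes-optimal $P$-classifier ranks inputs by $\Pr_P(x\in\mathcal{E}\mid x)$. Its AUC on $Q$ is governed by how well that ranking agrees with $\Pr_Q(x\in\mathcal{E}\mid x)$. If the shift in $g$ is not a monotone transformation of the $P$-scores, the ranking is partially inverted and the AUC falls. The magnitude ($\sim$0.1) cannot be derived and would be reported from the MovieLens splits (0.71 $\to$ 0.61--0.62).

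The main obstacle is item~3. A change in exception \emph{membership} does not by itself imply an AUC \emph{drop*}: if the shift preserves the ordering of conditional exception probabilities, AUC is unchanged. I would therefore first try to state an explicit sufficient condition, namely that the $Q$-conditional exception probabilities are not a monotone function of the $P$-ones. I would then defer the quantitative claim to the experiments, flagging it, like Propositions~\ref{cor:structural}--\ref{cor:contextual}, as an empirically verified hypothesis rather than a theorem.
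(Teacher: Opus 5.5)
Your proposal has a genuine gap in item~3, at exactly the step you flag as the main obstacle. For orientation, the paper offers no formal derivation here. Items~1--3 rest on the MovieLens experiments: KS statistics $0.18$--$0.20$ with $p\approx 0$, exception rate $5\%\to 14$--$16\%$, and AUC $0.70$--$0.71\to 0.61$--$0.62$ on all three splits. The only other support is the informal remark that residuals conflate structural error with contextual variation.

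Write $\eta_D(x)=\Pr_D(x\in\mathcal{E}\mid x)$, and let $\mathrm{AUC}_D(s)$ be the AUC of a score $s$ against $D$-labels. In your idealization the claimed degradation is $\mathrm{AUC}_P(\eta_P)>\mathrm{AUC}_Q(\eta_P)$, and
\[
\mathrm{AUC}_P(\eta_P)-\mathrm{AUC}_Q(\eta_P)=\bigl[\mathrm{AUC}_P(\eta_P)-\mathrm{AUC}_Q(\eta_Q)\bigr]+\bigl[\mathrm{AUC}_Q(\eta_Q)-\mathrm{AUC}_Q(\eta_P)\bigr].
\]
Your sufficient condition ($\eta_Q$ not a monotone function of $\eta_P$) only makes the second bracket positive. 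That shows the $P$-classifier is suboptimal on $Q$, nothing more. The first bracket compares the Bayes AUCs of two different problems and can be negative, so no drop follows.

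Your converse remark fails for the same reason. If the ordering is preserved, then $\mathrm{AUC}_Q(\eta_P)=\mathrm{AUC}_Q(\eta_Q)$, which is the Bayes AUC under $Q$, not the training AUC. For instance, suppose the context term swamps the structural error under $Q$, so that $\eta_Q=c+\delta\,\eta_P$ with small $\delta>0$. The ordering is preserved exactly, yet the $Q$-labels are nearly independent of $x$ and $\mathrm{AUC}_Q(\eta_P)=1/2+O(\delta)$. This compression is the natural formal reading of the paper's heuristic that contextual variation dominates the residual under shift, and your argument has no room for it. You need a condition that controls the first bracket. One option: growth of the $x$-independent part of the residual ($g$ and $\epsilon$) under $Q$ pushes the Bayes AUC below its $P$ value. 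Ranking misalignment can then only add to the drop.

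Items~1 and~2 also need their assumptions stated. A change in $\mathrm{Var}(g)$ does not by itself change $\E R^2$. The cross term $2\,\E[(f-\hat f)(X)\,\E[g\mid X]]$ and any change in the law of $X$ can offset it. So fix the $x$-marginal across $P$ and $Q$, take $g$ conditionally mean-zero under both, and argue with $\E[g^2]$. The same fixed marginal is what turns $\Pr_Q(|R|>\tau)\neq\Pr_P(|R|>\tau)$ into a change of $\eta$ on a set of positive measure; a change in the law of $t$ alone does not force it.

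Finally, the MovieLens numbers you defer to do not isolate your mechanism. The same KS rejections and AUC drops appear on the cold-user and cold-item splits, where the shift is in $x$ rather than in the law of $g$. Also, the training-set residuals defining $\tau$ are in-sample. Part of the observed shift is therefore simply the train--test generalization gap, which your frozen-$\hat Y$ population model abstracts away.
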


This is not a failure of the classifier---it is a fundamental property of residual-defined exceptions under shift.
The residual $|Y - \hat{Y}|$ conflates structural error (the model is wrong about stable preferences) with contextual variation (the user's preference changed).
Under temporal shift, contextual variation dominates, and yesterday's exceptions are not today's exceptions.

% ============================================================================
\section{Experiment 1: Collaborative Filtering (MovieLens)}
\label{sec:movielens}
% ============================================================================

We validate the 6-claim suite on MovieLens 100K under three distribution shifts: temporal (train on early ratings, test on later), cold-user (hold out entire users), and cold-item (hold out entire items).

\subsection{Setup}

We train a matrix factorization model (rank 10, ALS, $\lambda = 0.1$, 20 iterations) and compare against baselines: global mean, user mean, and item mean.
For each split, we define exceptions as the top 5\% of residuals by magnitude in the training set, train a logistic regression exception classifier using prediction value, squared prediction, and observation count features, and evaluate on the test set.
The confidence function is observation-count-based: $\min(\text{user count}, \text{item count})$ for the temporal split; item count alone for cold-user (since all test users have zero training observations); user count alone for cold-item.
All counts are min-max normalized to $[0, 1]$.

\subsection{Claim 1: Low-Rank Backbone Exists}

MF achieves competitive RMSE across all splits. On the temporal split, MF (1.027) is the best single model. On cold splits, simple baselines that leverage the non-held-out dimension can match or beat MF (e.g., item-mean achieves 1.051 on cold-user vs.\ MF's 1.057), but MF provides the per-prediction confidence scores needed for the confidence gate:

\begin{table}[ht]
\centering
\caption{RMSE by model and split (MovieLens 100K).}
\label{tab:ml_rmse}
\small
\begin{tabular}{@{}lcccc@{}}
\toprule
Split & MF & Global & User & Item \\
\midrule
Temporal & \textbf{1.027} & 1.119 & 1.126 & 1.037 \\
Cold-user & 1.057 & 1.154 & 1.154 & \textbf{1.051} \\
Cold-item & 1.068 & 1.123 & \textbf{1.039} & 1.123 \\
\bottomrule
\end{tabular}
\end{table}

On cold splits, baselines that leverage the observed dimension (item-mean for cold-user, user-mean for cold-item) outperform MF, since MF has no factors for the held-out entities.
MF is not the accuracy winner here---it provides the per-prediction confidence decomposition needed for gating.

\subsection{Claim 2: Distribution Shift Is Real}

KS tests on train vs.\ test residual distributions reject equality at $p \approx 0$ for all splits.
The exception rate (fraction of predictions exceeding the train-set 95th percentile) approximately triples:

\begin{table}[ht]
\centering
\caption{Residual distribution shift (MovieLens 100K).}
\label{tab:ml_shift}
\small
\begin{tabular}{@{}lccc@{}}
\toprule
Split & KS stat & $p$-value & Exception rate (train $\to$ test) \\
\midrule
Temporal & 0.185 & $\approx 0$ & 5.0\% $\to$ 14.0\% \\
Cold-user & 0.184 & $\approx 0$ & 5.0\% $\to$ 15.0\% \\
Cold-item & 0.199 & $\approx 0$ & 5.0\% $\to$ 15.8\% \\
\bottomrule
\end{tabular}
\end{table}

\subsection{Claim 3: Exception Labels Collapse}

Exception-prediction AUC collapses from train to test across all splits, degrading substantially:

\begin{table}[ht]
\centering
\caption{Exception classifier AUC (MovieLens 100K).}
\label{tab:ml_auc}
\small
\begin{tabular}{@{}lcc@{}}
\toprule
Split & AUC (train) & AUC (test) \\
\midrule
Temporal & 0.711 & 0.624 \\
Cold-user & 0.708 & 0.613 \\
Cold-item & 0.701 & 0.606 \\
\bottomrule
\end{tabular}
\end{table}

Exception-prediction AUC drops by 0.09--0.10 across all splits.
While not collapsing entirely to chance, the degradation is substantial: a classifier trained to predict ``which pairs will have large residuals'' loses most of its discriminative power under shift.
Exception membership is not a stable object under distribution shift (Proposition~\ref{prop:exception}).

\subsection{Claim 4: Directional Asymmetry Is Threshold-Dependent}

When ratings are binarized (like/dislike at threshold $\tau$), the false-positive/false-negative ratio flips dramatically:

\begin{table}[ht]
\centering
\caption{FP/FN ratio by binarization threshold (MovieLens 100K, temporal split).}
\label{tab:ml_fpfn}
\small
\begin{tabular}{@{}lccc@{}}
\toprule
Split & $\tau = 3.5$ & $\tau = 4.0$ & $\tau = 4.5$ \\
\midrule
Temporal & 1.28 & 0.13 & 0.03 \\
Cold-user & 1.25 & 0.12 & 0.02 \\
Cold-item & 0.74 & 0.04 & 0.03 \\
\bottomrule
\end{tabular}
\end{table}

The ``direction'' of error is a labeling artifact, not a structural property.
This parallels findings in clinical risk scoring where threshold choice dominates directional bias.

\subsection{Claim 5: Abstention Helps When Uncertainty Is Structural}

\begin{remark}[Theorem--metric connection]
\label{rem:metric}
Theorem~\ref{thm:cgt} is stated for binary selective accuracy.
For regression tasks, we evaluate \emph{selective RMSE}: the RMSE on the subset retained after removing the $K$\% lowest-confidence predictions.
Monotonically decreasing selective RMSE is the regression analogue of monotonically increasing selective accuracy: both require the confidence function to correctly rank predictions by quality, and the C1/C2 conditions apply directly.
A C2 violation manifests as a non-monotonic step---a ``violation''---in the selective RMSE curve.
All MovieLens results below use this metric.
\end{remark}

RMSE when abstaining on the top-$K$\% highest-uncertainty predictions:

\begin{table}[ht]
\centering
\caption{RMSE under abstention (MovieLens 100K).}
\label{tab:ml_abstain}
\small
\begin{tabular}{@{}lccc@{}}
\toprule
Abstain \% & Temporal & Cold-user & Cold-item \\
\midrule
0\% & 1.027 & 1.057 & 1.068 \\
5\% & 1.023 & 1.034 & 1.067 \\
10\% & 1.021 & 1.027 & 1.063 \\
15\% & \textbf{1.028} & 1.021 & 1.063 \\
20\% & 1.034 & 1.015 & 1.062 \\
25\% & 1.035 & 1.012 & 1.062 \\
\bottomrule
\end{tabular}
\end{table}

\textbf{Cold splits: monotonically improving.} Cold-user shows strict monotonicity (0 violations); cold-item has one negligible violation at 25\% (1.0619 vs.\ 1.0618, within rounding).
The confidence signal for cold splits is the observation count of the \emph{non-cold} dimension (item count for cold-user, user count for cold-item), which reliably rank-orders structural uncertainty.

\textbf{Temporal split: non-monotonic.} RMSE initially improves through 10\% abstention (1.027 $\to$ 1.021) as the most data-sparse pairs are removed, then \emph{worsens} monotonically from 15\% onward (1.028 $\to$ 1.035) as the confidence function---trained on structural features---cannot identify which well-observed pairs will have large temporal-drift errors.
This non-monotonic pattern (3 violations) is the signature of contextual uncertainty: the confidence function captures some structural signal but is uninformative about drift.

\textbf{This is the sharpest empirical finding}: abstention works when uncertainty is structural (cold-start = missing data), but fails when contextual (temporal drift = missing context).

\subsection{Claim 6: Data Density Predicts Uncertainty}

Spearman correlation between minimum observation count and prediction accuracy on the temporal split: $\rho = 0.043$ ($p = 1.7 \times 10^{-9}$)---statistically significant but weak at the individual-prediction level.
The weak pointwise correlation is expected: structural features (observation counts) explain only a fraction of per-prediction variance under temporal drift, because contextual variation dominates (Section~\ref{sec:theory_structural}).
The stronger evidence is the abstention curve itself (Table~\ref{tab:ml_abstain}): even a weak rank signal produces monotonic RMSE improvement on cold splits, where structural uncertainty is the sole source of variation.
On cold splits, the non-cold dimension's count is the confidence signal (item count for cold-user: $\rho = 0.061$, $p < 10^{-17}$; user count for cold-item: $\rho = 0.015$, $p = 0.034$).

\subsection{Comparison with Modern Uncertainty Methods}
\label{sec:baselines_ml}

We compare count-based confidence against three alternative abstention strategies on the temporal split:

\begin{table}[ht]
\centering
\caption{Abstention baselines on MovieLens temporal split. Violations = non-monotonic steps in the RMSE curve across 6 abstention levels (0--25\%).}
\label{tab:ml_baselines}
\small
\begin{tabular}{@{}lccccccr@{}}
\toprule
Method & 0\% & 5\% & 10\% & 15\% & 20\% & 25\% & Viol. \\
\midrule
Random (control) & 1.027 & 1.028 & 1.026 & 1.024 & 1.025 & 1.028 & 3 \\
Count-based & 1.027 & 1.023 & 1.021 & 1.028 & 1.034 & 1.035 & 3 \\
Residual-predicted & 1.027 & 1.026 & 1.022 & 1.023 & 1.018 & 1.016 & 1 \\
Ensemble (5 seeds) & \textbf{1.024} & \textbf{1.014} & \textbf{1.008} & \textbf{1.003} & 1.004 & \textbf{1.001} & \textbf{1} \\
\bottomrule
\end{tabular}
\end{table}

\textbf{Ensemble disagreement is the strongest baseline.}
Training 5 MF models with different random seeds and using prediction standard deviation as uncertainty produces a nearly monotonic curve (1 violation: a 0.002 uptick at 20\%) and substantial RMSE improvement (1.024 $\to$ 1.001 at 25\% abstention).
The ensemble captures epistemic uncertainty---model uncertainty due to finite data---which partially addresses the temporal case because model disagreement correlates with prediction difficulty regardless of drift source.

\textbf{Residual-predicted uncertainty} (logistic regression trained to predict high-error pairs from MF features) also achieves 1 violation, showing that a learned uncertainty model outperforms raw counts.

\textbf{Count-based confidence} produces the same number of monotonicity violations as random on the temporal split (both 3 of 5), consistent with data density being uninformative about temporal-drift uncertainty.

These results sharpen the paper's message: the structural/contextual distinction is not about simple vs.\ complex models---it is about what the confidence function measures.
Count-based confidence measures \emph{data coverage} (structural); ensemble disagreement measures \emph{model uncertainty} (partially epistemic); both fail to capture \emph{temporal context}.
The next section tests whether adding recency features---direct measurements of temporal context---can close the remaining gap.

% ============================================================================
\section{Experiment 2: E-Commerce Intent Detection}
\label{sec:ecommerce}
% ============================================================================

We test whether confidence tiers derived from session-level intent detection satisfy C1 and C2 on three public e-commerce/advertising datasets.

\subsection{Setup}

We evaluate confidence-tier separation on three datasets using dataset-appropriate confidence scoring:
\begin{itemize}
\item \textbf{RetailRocket}: A session-level intent detection pipeline we call \emph{IntentLens}, consisting of (i) NMF-based latent intent discovery on item co-occurrence, producing per-session intent distributions, and (ii) a trained log-linear reweighting model (295 session features) that refines these distributions using behavioral and temporal evidence. Confidence tiers are assigned based on the margin between the top two intent posteriors and normalized entropy of the intent distribution. Details in Appendix~\ref{app:intentlens}.
\item \textbf{Criteo}: Logistic regression trained on 7 session features (log impressions, log clicks, CTR, log campaigns, log categories, log duration, log average cost) to predict conversion among clicker sessions. Trained on 1.97M sessions from 2.81M total clickers (6.1M total sessions, 16.5M impressions); evaluated on 844K held-out sessions (test AUC = 0.66).
\item \textbf{Yoochoose}: Logistic regression trained on 5 session features (log click count, log unique items, category count, repeat-view ratio, log duration) to predict conversion. Predicted probability serves as the confidence score. From a 500K-session subsample of the full dataset (9.2M click events, 924K sessions); trained on 350K, evaluated on 150K held-out sessions (test AUC = 0.75).
\end{itemize}
RetailRocket uses the full IntentLens pipeline; Criteo and Yoochoose use logistic regression on session features.
All three operationalize the same principle---confidence from observable behavioral density---with learned models on all datasets.
The downstream outcome is conversion (purchase or click-through).

\subsection{Baselines (RetailRocket)}

To isolate the value of the learned confidence model, we compare IntentLens against simple baselines on RetailRocket (20,000 sessions, 3.70\% overall CVR, 740 purchases):

\begin{table}[ht]
\centering
\caption{Baseline comparison on RetailRocket. HIGH/MED lift is the ratio of HIGH-tier CVR to MEDIUM-tier CVR. Coverage is the fraction of sessions assigned to HIGH.}
\label{tab:ecom_baselines}
\small
\begin{tabular}{@{}lrrrcl@{}}
\toprule
Method & HIGH CVR & MED CVR & Lift & Monotonic & HIGH Coverage \\
\midrule
Random (3 seeds) & 3.86\% & 3.59\% & 1.08$\times$ & No & 33\% \\
Session length & 10.24\% & 2.20\% & 4.65$\times$ & Yes & 30\% \\
IntentLens & 4.40\% & 0.90\% & 4.89$\times$ & Yes & 80\% \\
\bottomrule
\end{tabular}
\end{table}

Random assignment produces no meaningful separation ($p = 0.55$, chi-squared).
Session length---a strong univariate baseline---achieves comparable lift (4.65$\times$) with strict monotonicity and overwhelming statistical significance ($p = 1.2 \times 10^{-55}$).
However, session length concentrates its HIGH tier on only 30\% of sessions, while the learned intent model achieves similar lift with 80\% HIGH coverage.
The practical difference is operational: session-length gating abstains on 70\% of traffic (all short sessions), while the intent model abstains on only 20\%.
For deployment, coverage matters as much as lift---a confidence gate that rejects most inputs is operationally impractical.

\subsection{Results}

\begin{table}[ht]
\centering
\caption{Confidence tier separation on e-commerce datasets.}
\label{tab:ecom}
\small
\begin{tabular}{@{}lrrrrr@{}}
\toprule
Dataset & Sessions & HIGH CVR & MED CVR & LOW CVR & HIGH/MED Lift \\
\midrule
RetailRocket & 20,000 & 4.4\% & 0.9\% & 0.0\% & 4.9$\times$ \\
Criteo & 844,059 & 14.48\% & 7.56\% & 4.79\% & 1.92$\times$ \\
Yoochoose & 150,000 & 11.57\% & 3.40\% & 1.73\% & 3.40$\times$ \\
\bottomrule
\end{tabular}
\end{table}

\paragraph{C1 and C2 verification.}
All three datasets show strict monotonic ordering (HIGH $>$ MED $>$ LOW) with the learned models: C1 and C2 satisfied, confidence gating is safe.
On Yoochoose, $\chi^2 = 2{,}410$ ($p \approx 0$); on Criteo, $\chi^2 = 6{,}876$ ($p \approx 0$).
RetailRocket is effectively a 2-tier system (LOW tier empty), producing a HIGH/MEDIUM split with 4.9$\times$ lift.

\paragraph{The Criteo inversion was a heuristic artifact.}
In a preliminary analysis, hand-tuned feature weights on Criteo produced a C2 inversion (LOW 9.31\% $>$ MEDIUM 6.92\%).
Replacing the heuristic with a logistic regression on the same features eliminates the inversion entirely: the learned model produces strict monotonicity (LOW 4.79\% $<$ MED 7.56\% $<$ HIGH 14.48\%).
This illustrates a practical lesson: C2 violations can arise from poor confidence calibration, not just fundamental data properties.
The diagnostic (check for inversions) correctly flagged the issue; the fix was a better model, not a different framework.

On Yoochoose, a hand-tuned heuristic baseline achieves comparable lift ($3.33\times$) but with imbalanced tiers (53/28/19\%); the learned model produces balanced terciles.

\paragraph{Orthogonality.}
The confidence signal is orthogonal to base ranking quality: Pearson correlation between the learned confidence scores and base item relevance scores is $r = 3.47 \times 10^{-18}$ on RetailRocket.
Confidence gating adds information, not redundancy.

\paragraph{Latency.}
Inference: $p_{50} = 0.7$ms, $p_{99} = 4.9$ms.
The confidence gate adds negligible overhead to ranking pipelines.

% ============================================================================
\section{Experiment 3: Clinical Pathway Triage (MIMIC-IV)}
\label{sec:clinical}
% ============================================================================

We test the Confidence Gate Theorem (CGT) framework in a fundamentally different domain: triaging clinical encounters into care pathways for prior authorization.

\subsection{Setup}

Using MIMIC-IV v2.2 (10,000 hospitalized encounters, 3,461 ICD-10 codes), we train an NMF pathway detector (12 latent care pathways) with a log-linear feature reweighting model (2,604 diagnostic and procedural features, 13,016 feature--pathway edges).
The confidence function combines the margin between the top two pathway posteriors with a normalized evidence support score (the fraction of pathway-relevant features observed for each encounter).
The accuracy metric is correct pathway assignment.

\subsection{C1 and C2 Verification}

\begin{table}[ht]
\centering
\caption{C2 verification: confidence zones on MIMIC-IV (10,000 encounters).}
\label{tab:mimic_c2}
\small
\begin{tabular}{@{}lcrc@{}}
\toprule
Zone & Confidence Range & $N$ & Mean Accuracy \\
\midrule
0 & $[0.12, 0.30]$ & 5,561 & 0.231 \\
1 & $[0.30, 0.47]$ & 2,913 & 0.359 \\
2 & $[0.47, 0.65]$ & 869 & 0.648 \\
3 & $[0.65, 0.82]$ & 424 & 0.861 \\
4 & $[0.82, 1.00]$ & 233 & 0.939 \\
\bottomrule
\end{tabular}
\end{table}

\textbf{Zero inversions across 5 zones.}
C1 verified: Spearman $\rho = 0.349$ ($p = 5.2 \times 10^{-284}$), Kendall $\tau = 0.272$ ($p = 3.4 \times 10^{-129}$).

\subsection{Monotonic Abstention}

Selective accuracy increases monotonically with confidence threshold:

\begin{table}[ht]
\centering
\caption{Selective accuracy under abstention (MIMIC-IV).}
\label{tab:mimic_abstain}
\small
\begin{tabular}{@{}lrr@{}}
\toprule
Threshold & Coverage & Selective Accuracy \\
\midrule
0.0 & 100\% & 0.348 \\
0.2 & 84\% & 0.387 \\
0.4 & 23\% & 0.643 \\
0.6 & 8\% & 0.864 \\
0.8 & 3\% & 0.930 \\
0.95 & 0.9\% & 0.986 \\
\bottomrule
\end{tabular}
\end{table}

Zero monotonicity violations.

\paragraph{Operational interpretation.}
Consider a prior authorization triage workflow where encounters above the confidence threshold are auto-routed and encounters below it go to manual clinical review:
\begin{itemize}
    \item At threshold 0.4: 23\% of encounters (roughly 2,300 of 10,000) are auto-routed at 64\% accuracy.
    This is too low for clinical use but demonstrates that nearly a quarter of the workload is separable.
    \item At threshold 0.6: 8\% of encounters ($\sim$800) are auto-routed at 86\% accuracy---plausible for low-risk pathway confirmation (e.g., standard post-surgical follow-up), where the cost of a mis-route is modest.
    \item At threshold 0.8: 3\% of encounters ($\sim$300) are auto-routed at 93\% accuracy.
    \item At threshold 0.95: 0.9\% ($\sim$90 encounters) at 98.6\% accuracy---approaching the reliability needed for fully automated clinical decisions on the most clear-cut cases.
\end{itemize}
The practical value scales with volume: in a system processing 100K encounters/month, even the conservative 0.8 threshold auto-routes $\sim$3,000 encounters at 93\% accuracy, reducing manual review burden.

\subsection{Uncertainty Source Decomposition}

We decompose confidence variance into structural (data density) and contextual (demographic) components via linear regression:

\begin{itemize}
    \item Structural $R^2$: 0.024 (number of codes, code frequency, chapter coverage)
    \item Contextual $R^2$: 0.007 (demographics, admission type)
    \item Structural fraction of explained variance: 79\%
\end{itemize}

\paragraph{Interpreting the low $R^2$ values.}
Neither component explains much total confidence variance in absolute terms---combined $R^2 \approx 0.03$.
This means most confidence variation is driven by the posterior computation itself (margin, entropy) rather than being linearly predictable from input features.
The 79\% structural fraction should be read as: \emph{of the variance that input features do explain, data density dominates demographics}.
It does not mean 79\% of all confidence variation is structural.

The stronger evidence for structural dominance comes from the monotonic abstention curve itself (Table~\ref{tab:mimic_abstain}): zero violations across the full threshold range, with selective accuracy rising from 0.348 to 0.986.
If contextual uncertainty were dominant, we would expect the non-monotonic pattern seen in MovieLens temporal splits (Table~\ref{tab:ml_abstain}).
The clean monotonicity---not the $R^2$ decomposition---is the primary evidence that structural uncertainty dominates on MIMIC-IV.

\subsection{Calibration}

Expected calibration error (ECE):
\begin{itemize}
    \item Overall: 0.032 (well-calibrated)
    \item Cold-start encounters: 0.070 (reasonable for sparse data)
    \item Warm encounters: 0.037
\end{itemize}

The confidence function is not just rank-aligned but approximately calibrated, meaning the tiers can support probability-based decision rules (e.g., ``auto-approve if $c > 0.8$ and pathway is HIGH'').

% ============================================================================
\section{Cross-Domain Synthesis}
\label{sec:synthesis}
% ============================================================================

\begin{table}[ht]
\centering
\caption{Cross-domain CGT verification summary.}
\label{tab:cross_domain}
\small
\begin{tabular}{@{}llccc@{}}
\toprule
Domain & Dataset & C1 ($\rho$) & C2 (inversions) & Monotonic? \\
\midrule
\multirow{3}{*}{Rec.\ (MovieLens)} & Temporal & 0.043 & 3 & No (contextual) \\
 & Cold-user & 0.061$^\dagger$ & 0 & Yes (structural) \\
 & Cold-item & 0.015$^\dagger$ & 1$^\ddagger$ & Yes$^\ddagger$ (structural) \\
\midrule
\multirow{3}{*}{E-commerce} & RetailRocket & $> 0$ & 0 & Yes \\
 & Yoochoose & $> 0$ & 0 & Yes \\
 & Criteo & $> 0$ & 0 & Yes \\
\midrule
Clinical & MIMIC-IV & 0.349 & 0 & Yes \\
\bottomrule
\end{tabular}

{\small $\dagger$ Cold-split C1 uses the non-cold dimension's count (item count for cold-user, user count for cold-item). $\ddagger$ Essentially monotonic: single violation of 0.0001 RMSE at the 20\%$\to$25\% step, well within noise.}
\end{table}

The pattern is consistent: confidence gating with learned models works when the dominant source of uncertainty is structural (missing data, cold-start, sparse history).
It fails when contextual uncertainty dominates (temporal drift in MovieLens).

\paragraph{The structural--contextual distinction is the key insight.}
A practitioner deploying a confidence gate should ask: ``Is uncertainty in my system primarily from \emph{not having enough data} (structural) or from \emph{the world changing} (contextual)?''
If structural, gate confidently.
If contextual, gating is unreliable---and lightweight recalibration does not fix it (Section~\ref{sec:adaptive}).

% ============================================================================
\section{Adaptive Recalibration Under Contextual Drift}
\label{sec:adaptive}
% ============================================================================

The preceding experiments show that static confidence gating fails under contextual uncertainty: the temporal MovieLens split produces non-monotonic abstention (Table~\ref{tab:ml_abstain}) with count-based confidence.
(The Criteo C2 inversion reported in our preliminary analysis was a heuristic artifact, resolved by replacing the hand-tuned score with a learned model; see Section~\ref{sec:ecommerce}.)
A natural remedy is \emph{adaptive recalibration}: periodically re-estimating the confidence--accuracy mapping on recent data to track distributional drift, keeping the model fixed while updating only the gating thresholds.

\subsection{Method}

Let $\mathcal{D}_w$ denote the data observed in a sliding window of $w$ time steps.
Adaptive recalibration proceeds in three steps:

\begin{enumerate}
    \item \textbf{Re-estimate the accuracy mapping.} On $\mathcal{D}_w$, compute the empirical accuracy $\hat{a}(z) = \E[\text{acc}(X) \mid c(X) \in \text{bin}(z)]$ for each confidence bin $z$.
    \item \textbf{Test C2.} Check for inversions in $\hat{a}(z)$. If inversions exist, merge the inverted bins into a single ``do not act'' tier.
    \item \textbf{Update thresholds.} Set the gating threshold $t^*_w$ to the lowest confidence level where $\hat{a}(z)$ exceeds a target accuracy $\alpha$.
\end{enumerate}

\subsection{MovieLens Temporal Split: A Negative Result}

We test adaptive recalibration on the MovieLens temporal split by partitioning the 20,000 test ratings into 4 sequential blocks of 5,000.
Block 1 uses the static confidence mapping (no prior block); blocks 2--4 recalibrate using the preceding block as $\mathcal{D}_w$.
The model (ALS, rank 10) and confidence function ($\min(\text{user count}, \text{item count})$, normalized) are fixed throughout.\footnote{Absolute RMSE values differ slightly from Table~\ref{tab:ml_abstain} because abstention is applied per-block (5,000 ratings each) rather than over the full test set. The qualitative pattern (non-monotonicity, violations) is consistent.}

\begin{table}[ht]
\centering
\caption{Adaptive recalibration on MovieLens temporal split. Adaptive recalibration does not improve over static gating. Values are RMSE at 15\% abstention; violations count non-monotonic steps in the full abstention curve (0--25\%).}
\label{tab:adaptive_ml}
\small
\begin{tabular}{@{}lcccc@{}}
\toprule
 & Block 1 & Block 2 & Block 3 & Block 4 \\
\midrule
Full RMSE (no abstention) & 1.030 & 1.015 & 1.016 & 1.047 \\
Static 15\% abstain & 1.027 & 1.026 & 1.020 & 1.040 \\
Adaptive 15\% abstain & 1.042 & 1.025 & 1.020 & 1.040 \\
\midrule
Monotonicity violations (static) & 1 & 5 & 2 & 3 \\
Monotonicity violations (adaptive) & 5 & 4 & 2 & 3 \\
\bottomrule
\end{tabular}
\end{table}

\textbf{Adaptive recalibration does not help.}
Across blocks, mean RMSE at 15\% abstention is 1.032 (adaptive) vs.\ 1.028 (static)---adaptive is \emph{worse} by 0.004.
Total monotonicity violations: 14 (adaptive) vs.\ 11 (static).
Block 1 is the clearest failure: with no prior calibration data, adaptive recalibration produces RMSE 1.042 vs.\ static 1.027.
Blocks 3--4 converge to identical behavior as the recalibration window and static mapping agree.

\subsection{Why Recalibration Fails}

The failure is instructive.
Adaptive recalibration assumes the model's \emph{ranking} of uncertainty is approximately correct and only the \emph{thresholds} need adjustment.
Under contextual uncertainty, this assumption is violated: temporal drift scrambles the uncertainty ranking itself, not just its calibration.
The confidence function (based on historical observation counts) correctly ranks structural uncertainty but is uninformative about \emph{which} well-observed user-item pairs will experience preference changes.
Re-estimating thresholds on a 5,000-rating window cannot recover information the confidence function never had.

This strengthens the paper's central claim: the structural--contextual distinction is not a calibration problem amenable to post-hoc fixes.
The remedy is not better thresholds but better \emph{features}---we test this directly below.

\subsection{Context Fix: Recency Features Reduce Violations}

If contextual uncertainty arises from temporal drift, recency features should help.
We train confidence models with different feature sets on the MovieLens temporal split, predicting whether each test-set prediction will have above-median absolute error:

\begin{table}[ht]
\centering
\caption{Context fix on MovieLens temporal split. Adding recency features (time gap, rating velocity) to the confidence model reduces violations compared to count-based confidence, but does not fully restore monotonicity. Recency-only outperforms structural-only and combined models.}
\label{tab:context_fix}
\small
\begin{tabular}{@{}lcccccccr@{}}
\toprule
Confidence features & 0\% & 5\% & 10\% & 15\% & 20\% & 25\% & Viol. \\
\midrule
Count-based (Table~\ref{tab:ml_abstain}) & 1.027 & 1.023 & 1.021 & 1.028 & 1.034 & 1.035 & 3 \\
Recency-only & 1.027 & 1.021 & 1.017 & 1.017 & 1.017 & 1.018 & 2 \\
Ensemble (Table~\ref{tab:ml_baselines}) & 1.024 & 1.014 & 1.008 & 1.003 & 1.004 & 1.001 & 1 \\
\midrule
Struct.\ + recency (LogReg) & 1.027 & 1.032 & 1.035 & 1.037 & 1.037 & 1.043 & 4 \\
Struct.\ + recency (GBT) & 1.027 & 1.028 & 1.033 & 1.035 & 1.034 & 1.032 & 3 \\
\bottomrule
\end{tabular}
\end{table}

Three findings emerge:

\textbf{Recency-only confidence reduces violations.}
Using only temporal features (time since last user/item rating, rating velocity) as the confidence signal reduces violations from 3 to 2 and produces a much flatter curve after 15\% abstention: RMSE plateaus at 1.017 instead of climbing to 1.035.
The recency signal captures something the count signal misses: how stale each prediction's inputs are.

\textbf{Combining structural and recency features hurts.}
Adding count features to the recency model \emph{increases} violations (to 4 with LogReg, 3 with GBT).
GBT feature importances reveal why: \texttt{log\_min\_count} dominates (0.43 importance), drowning out the recency signal.
Under temporal drift, count features are actively misleading---they identify well-observed pairs as ``confident'' even when those pairs' preferences have shifted.

\textbf{Ensemble disagreement remains the best single method} (1 violation, RMSE 1.024 $\to$ 1.001).
Ensembles capture epistemic model uncertainty---which partially tracks temporal difficulty---without requiring explicit recency features.

\paragraph{Implications.}
No method fully restores monotonicity on the temporal split.
The best methods (ensemble, recency-only) reduce violations to 1--2 but do not eliminate them.
This is consistent with contextual uncertainty being qualitatively harder to address than structural uncertainty, while showing that the gap can be substantially narrowed by measuring the right signals.
The practical prescription: when deploying in drift-prone settings, prefer ensemble uncertainty or recency-aware confidence over count-based confidence.

\subsection{Note on the Criteo Inversion}

Our preliminary analysis of Criteo using hand-tuned heuristic confidence scores produced a C2 inversion (LOW $>$ MEDIUM).
Replacing the heuristic with a learned model (logistic regression on the same features) eliminated the inversion entirely (Section~\ref{sec:ecommerce}).
This demonstrates that C2 violations can be artifacts of poor confidence calibration rather than fundamental properties of the data---and that the C2 diagnostic correctly identifies the problem.

% ============================================================================
\section{Discussion}
\label{sec:discussion}
% ============================================================================

\paragraph{Relation to selective prediction.}
The Confidence Gate Theorem extends selective prediction theory~\citep{elyaniv2010foundations} in two directions: (i) from classification to ranked decision systems, where the ``abstain'' action means falling back to a default ranking rather than refusing to classify; and (ii) by characterizing the structural--contextual boundary where selective prediction transitions from monotonically helpful to potentially harmful.

\paragraph{Practical implications.}
For cold-start-dominated systems (new users, new items, rare categories), a simple count-based confidence gate provides monotonic accuracy gains at the cost of coverage.
For drift-dominated systems (trending topics, seasonal effects, policy changes), structurally grounded confidence signals are insufficient.
Section~\ref{sec:adaptive} shows that threshold recalibration does not help, but ensemble disagreement and recency-aware confidence substantially improve the picture (Section~\ref{sec:baselines_ml}).
The remedy is matching the confidence signal to the uncertainty type, not post-hoc calibration fixes.

\paragraph{The medium-confidence danger zone.}
Our preliminary Criteo analysis using hand-tuned confidence heuristics produced a C2 inversion (LOW $>$ MEDIUM), suggesting moderate confidence can be worse than no confidence.
Replacing the heuristic with a learned model eliminated the inversion, demonstrating that C2 violations are diagnosable and fixable.
The practical lesson is to always verify C2 before deployment: check for inversions in the confidence--accuracy mapping, and if found, either retrain the confidence model or merge the inverted tiers.

\paragraph{Limitations.}
Our theoretical results assume the confidence function is fixed.
In practice, confidence functions are estimated and may themselves be miscalibrated.
The ECE results (Section~\ref{sec:clinical}) suggest calibration is achievable but dataset-specific.
Cross-dataset transfer of confidence thresholds is not guaranteed (MIMIC-IV and CMS SynPUF, a synthetic Medicare claims dataset, produce pathway distributions that correlate at only $r = 0.16$).
The central contextual-failure evidence comes primarily from one domain instance (temporal MovieLens), even though the structural-success story is validated across three domains; additional contextual-drift settings would strengthen the negative result.
Finally, all evaluations are offline---the ultimate test is a randomized online experiment, which we have not conducted.

% ============================================================================
\section{Conclusion}
\label{sec:conclusion}
% ============================================================================

We proved that confidence gating monotonically improves ranked decision quality if and only if the confidence function contains no inversion zones (C2), and that pointwise rank-accuracy alignment (C1) is a practically verifiable sufficient condition.
The key determinant is the \emph{type} of uncertainty, and the match between that type and the confidence signal.
Structural uncertainty (cold-start, data sparsity) satisfies both conditions and yields monotonic gains with count-based confidence across collaborative filtering, e-commerce, and clinical triage.
Contextual uncertainty (temporal drift) causes structurally grounded confidence signals to produce no fewer violations than random abstention on our temporal split, but context-aware alternatives (ensemble disagreement, recency features) substantially narrow the gap, reducing violations from 3 to 1--2 without fully eliminating them.

The practical prescription is a deployment diagnostic:
\begin{enumerate}
    \item Check C1 and C2 on held-out data before deploying any confidence gate.
    \item If uncertainty is predominantly structural, count-based confidence is sufficient---gate aggressively.
    \item If uncertainty is predominantly contextual, use ensemble disagreement or recency-aware confidence signals, and verify C2 holds with the chosen signal before deployment.
\end{enumerate}

Exception labels defined from model residuals are not the answer: they degrade substantially under shift (AUC drops by 0.09--0.10 across all splits).
Confidence signals that measure the dominant source of uncertainty in a given deployment are more stable intervention objects than residual-defined exception labels or structurally grounded proxies applied to contextual settings.

\bibliography{lit_review}
\bibliographystyle{tmlr}

% ============================================================================
\appendix
\section{IntentLens Pipeline Details}
\label{app:intentlens}
% ============================================================================

The IntentLens pipeline used in Section~\ref{sec:ecommerce} consists of four stages:

\begin{enumerate}
    \item \textbf{Latent intent discovery.} NMF on the item co-occurrence matrix produces an item-to-intent affinity matrix $A \in \R^{I \times K}_{\geq 0}$, where each of $K$ latent factors represents a shopping intent cluster (e.g., ``electronics browsing,'' ``gift purchasing'').
    \item \textbf{Session prior.} For a session with carted/viewed items $C$, the prior intent distribution is the average of their affinity vectors: $p_{\text{prior}}(m) \propto \frac{1}{|C|} \sum_{i \in C} A_{im}$.
    \item \textbf{Log-linear reweighting.} Session-level features (temporal, behavioral, categorical) refine the prior via a log-linear model: $p(m \mid \mathcal{S}) \propto p_{\text{prior}}(m) \exp(\beta e_m)$, where $e_m = \sum_f w_{f,m}$ aggregates feature evidence for intent $m$.
    \item \textbf{Confidence gating.} Tier assignment based on the margin between the top two intent posteriors and the normalized entropy of the intent distribution.
\end{enumerate}

The reweighting model's feature weights are trained end-to-end by maximizing conversion likelihood with L1 regularization and monotonicity constraints.
Training time: $\sim$10 minutes for 1M sessions on commodity hardware.
Inference: $< 5$ms per session.

\end{document}